\documentclass[accepted]{safeai2026} 

\usepackage[american]{babel}
\usepackage{natbib}
\usepackage{mathtools}
\usepackage{amssymb}
\usepackage{amsthm}
\usepackage{booktabs}
\usepackage{graphicx}

\theoremstyle{plain}
\newtheorem{theorem}{Theorem}
\newtheorem{proposition}{Proposition}

\theoremstyle{definition}
\newtheorem{definition}{Definition}
\theoremstyle{remark}

\newcommand{\R}{\mathbb{R}}
\newcommand{\E}{\mathbb{E}}
\newcommand{\Sset}{\mathcal{S}}

\newcommand{\Zset}{\mathcal{Z}}
\newcommand{\Uset}{\mathcal{U}}
\newcommand{\simplex}{\Delta}
\newcommand{\Wass}{\mathcal{W}_1}
\newcommand{\KL}{\mathrm{KL}}
\newcommand{\Ent}{\mathcal{H}}
\newcommand{\EVaR}{\mathrm{EVaR}}
\newcommand{\WEVaR}{\mathrm{WEVaR}}
\newcommand{\Lip}{\mathrm{Lip}}
\newcommand{\esssup}{\operatorname*{ess\,sup}}
\newcommand{\Pbar}{\bar{P}}

\title{Robust Risk Under Evolving Uncertainty: A Wasserstein Counterpart of the Entropic Value-at-Risk}

\author[1]{\href{mailto:diganto.ganguly@gmail.com}{Deep~Ganguly}\thanks{Funded by the Deutsche Forschungsgemeinschaft (DFG) research training group ConVeY (GRK~2428).}}
\author[1,2]{Jan~K\v{r}et\'insk\'y}
\affil[1]{Technical University of Munich, Germany}
\affil[2]{Masaryk University, Brno, Czech Republic}

\begin{document}
\maketitle

\begin{abstract}
An agent still learning its environment should be cautious while ignorant and bold once confident. The entropic value-at-risk captures this through a robust-optimization identity---a confidence level fixes the radius of a relative-entropy ball of alternative models---but that ball cannot reach catastrophes the nominal deems impossible, precisely what a safe agent must hedge. We instead use an optimal-transport ball and study the coherent risk measure it induces, the Wasserstein entropic value-at-risk. It has a variational dual mirroring the entropic formula (an inverse temperature becomes a transport price), occupies a definite place in the risk hierarchy, and provably accounts for the reachable catastrophes the entropic measure ignores; we verify both dualities numerically. Driving the transport radius by belief entropy then yields a closed-form robust dynamic-programming operator whose caution contracts as the belief sharpens, with a certified safety sandwich and a sharp safety switch.
\end{abstract}

\section{Risk Under Evolving Uncertainty}\label{sec:intro}
Consider a drone crossing a canyon under uncertain wind. A \emph{high-cruise} policy is fast when the air is calm but is thrown into the cliffs by a sudden gust; a \emph{hover-and-creep} policy is safe in any wind but slow. The wind regime---a hidden type $z$ drawn once---is never observed directly; the drone sees only noisy signals and must infer $z$ as it flies. Two reflexes each fail. A controller that assumes the \emph{worst case forever} (robust control \citep{iyengar2005,nilim2005,wiesemann2013}) keeps hovering even after the air has visibly calmed: safe, but so conservative that operators switch it off. A controller that \emph{commits before its belief sharpens} (expected-value or Bayesian planning \citep{kaelbling1998,ghavamzadeh2015}) cruises on a hunch, and a single wrong guess about $z$ is fatal---an expected return that blends a benign mode with a catastrophic one is a number nobody actually attains. What is wanted is a controller whose caution \emph{evolves with its evidence}---maximal under ignorance, vanishing once the environment is identified---together with a safety bound that holds at \emph{every} stage of learning. The agent's risk attitude must therefore be a function of its \emph{evolving} epistemic state, and the entropy of its belief is a live readout of how much it should distrust its own model.

The entropic value-at-risk \citep{ahmadijavid2012} turns this intuition into algebra. It is the unique single-parameter coherent family that sweeps from the mean to the essential supremum, and it has an exact \emph{distributionally robust} representation: its confidence level $\alpha$ equals the radius $-\ln\alpha$ of a relative-entropy ball of alternative laws. \citet{ganguly2025evar} further show the induced optimization is convex and admits well-posed, convergent estimation. This robust reading is what we want for evolving uncertainty: ``how conservative should I be?'' becomes ``how large should my ambiguity set be?'', and the latter is answered by the agent's own uncertainty.

But the relative-entropy geometry has a blind spot that is decisive for \emph{safety}. A ball $\{Q:\KL(Q\Vert P)\le\rho\}$ contains only laws absolutely continuous with respect to the nominal $P$: if $P$ assigns zero probability to a catastrophic transition, no member assigns it positive probability, since $\KL$ is infinite off the support of $P$. As the agent learns and its nominal kernel concentrates away from rare disasters, the relative-entropy adversary loses the ability even to \emph{represent} those disasters, so the safety check becomes \emph{vacuous} precisely when overconfidence is most dangerous. The optimal-transport (Wasserstein) geometry has no such blind spot: it prices a perturbation by how far mass must physically move, so a verifier can still ask ``what if it is a storm?'' at a cost proportional to the distance to that outcome---\emph{reachable but expensive} \citep{esfahani2018,gao2023,blanchet2019}.

\paragraph{Related work.}
Risk-averse dynamic programming \citep{ruszczynski2010}, risk-sensitive control and reinforcement learning \citep{howard1972,chow2015,hau2023}, percentile and parameter-uncertainty MDPs \citep{delage2010}, constrained and safe RL \citep{altman1999,garcia2015,sui2015,brunke2022}, and adversarially- or Wasserstein-robust RL \citep{pinto2017,abdullah2019} all add caution to sequential decisions---but with a \emph{fixed} attitude. Our contribution is to let the ambiguity radius, hence the risk attitude, be read off the agent's evolving belief, and to give it the optimal-transport geometry safety requires \citep{kuhn2019}.

\paragraph{Contributions.}
We keep the entropic value-at-risk's robust formulation and its guarantees, and replace its ball.
\begin{itemize}[leftmargin=1.2em,itemsep=1pt,topsep=1pt]
\item We define the \textbf{Wasserstein entropic value-at-risk} ($\WEVaR$) and prove a Kantorovich--Rubinstein \textbf{variational dual} (Thm.~\ref{thm:dual}) that mirrors the entropic formula term for term, is \textbf{convex and well-posed} (the transport analogue of the guarantee of \citealp{ganguly2025evar}), and has a closed ``mean-plus-Lipschitz'' form.
\item We show it is \textbf{coherent} and place it in the risk hierarchy (Thm.~\ref{thm:hier}): it is sandwiched against the entropic measure by a transport--entropy inequality and \textbf{strictly accounts} for zero-nominal-probability catastrophes the entropic measure ignores.
\item We \textbf{verify} both robust-optimization problems, their dualities, and the comparison numerically with Gurobi (\S\ref{sec:verify}), confirming strong duality to solver tolerance.
\item Driving the transport radius by \textbf{belief entropy} gives a closed-form robust dynamic-programming operator (Thm.~\ref{thm:bellman}) and a computable safety switch under evolving belief (\S\ref{sec:evolving}).
\end{itemize}

\section{The Entropic Value-at-Risk and Its Robust Formulation}\label{sec:evar}
Let $X$ be a bounded loss on a finite space $\Sset$ with reference law $P$, and let $d:\Sset\times\Sset\to\R_{\ge0}$ be a ground metric with diameter $D=\max_{s,s'}d(s,s')$. A risk measure is \emph{coherent} \citep{artzner1999} if it is monotone, translation-invariant, positively homogeneous and subadditive; the conditional value-at-risk \citep{rockafellar2000} and the entropic value-at-risk are its canonical instances. The latter has the dual/primal pair
\begin{align}
\EVaR_\alpha(X)&=\inf_{t>0}\ \tfrac1t\big(\ln\E_P[e^{tX}]-\ln\alpha\big)\label{eq:evar-primal}\\
&=\sup\big\{\E_Q[X]:\KL(Q\Vert P)\le-\ln\alpha\big\},\label{eq:evar-dual}
\end{align}
where the inverse temperature $t$ in \eqref{eq:evar-primal} is the multiplier on the relative-entropy constraint in \eqref{eq:evar-dual}, and the worst-case law is the exponential tilt $Q^\star_s\propto P_s e^{t^\star X_s}$ \citep{ahmadijavid2012,ganguly2025evar}. Two facts we carry forward from \citet{ganguly2025evar}: \emph{(G1)} the reparametrized objective is convex, so \eqref{eq:evar-primal} is a well-posed one-dimensional convex program with a unique solution; \emph{(G2)} the tilt $Q^\star$ is supported on $\mathrm{supp}(P)$. Property (G2) is the blind spot: $\EVaR_\alpha(X)$ is independent of the value of $X$ on any zero-nominal-probability state, \emph{for every} $\alpha$.

\section{Swapping the Ball: A Wasserstein Robust Risk}\label{sec:wevar}
We keep the robust template \eqref{eq:evar-dual} and replace the relative-entropy ball by a $1$-Wasserstein ball, $\Wass(Q,P)=\min_{\gamma\in\Gamma(Q,P)}\sum_{s,s'}d(s,s')\gamma(s,s')$.

\begin{definition}[$\WEVaR$]\label{def:wevar}
For radius $\varepsilon\ge0$,
\begin{equation}\label{eq:wevar}
\WEVaR_\varepsilon(X)=\sup\big\{\E_Q[X]:\Wass(Q,P)\le\varepsilon\big\}.
\end{equation}
\end{definition}

\begin{theorem}[Variational dual, convexity, closed form]\label{thm:dual}
For bounded $X$ and $\varepsilon\ge0$,
\begin{equation}\label{eq:wevar-dual}
\WEVaR_\varepsilon(X)=\inf_{\lambda\ge0}\big\{\lambda\varepsilon+\E_P[\,X^{c}_\lambda\,]\big\},
\end{equation}
where $X^{c}_\lambda(s)=\max_{s'}(X(s')-\lambda\,d(s',s))$ is the $c$-transform of $X$. The objective is convex in $\lambda$, so \eqref{eq:wevar-dual} is a well-posed one-dimensional convex program (mirroring \emph{(G1)}); the optimal price satisfies $\lambda^\star\le\Lip_d(X)$; $\varepsilon\mapsto\WEVaR_\varepsilon(X)$ is concave and nondecreasing; and
\begin{equation}\label{eq:wevar-closed}
\WEVaR_\varepsilon(X)=\E_P[X]+\varepsilon\,\Lip_d(X)
\end{equation}
for $\varepsilon$ below a saturation threshold (in particular, exactly for two-point supports), where $\Lip_d(X)=\max_{s\ne s'}|X(s)-X(s')|/d(s,s')$.
\end{theorem}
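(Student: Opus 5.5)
The plan is to treat \eqref{eq:wevar} as a finite linear program over couplings and read every claim off LP duality. Write $Q$ through a coupling $\gamma(s',s)\ge0$ whose second marginal is $P$, i.e.\ $\sum_{s'}\gamma(s',s)=P_s$. Then $Q_{s'}=\sum_s\gamma(s',s)$, and $\Wass(Q,P)\le\varepsilon$ holds if and only if some such $\gamma$ satisfies $\sum_{s,s'}d(s',s)\gamma(s',s)\le\varepsilon$. Hence
\[
\WEVaR_\varepsilon(X)=\max_{\gamma\ge0}\Big\{\textstyle\sum_{s',s}X(s')\gamma(s',s)\ :\ \sum_{s'}\gamma(s',s)=P_s\ \forall s,\ \sum_{s',s}d(s',s)\gamma(s',s)\le\varepsilon\Big\}.
\]
The LP is feasible via the diagonal coupling ($Q=P$, zero cost) and bounded because $X$ is bounded, so strong duality holds. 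Attach a multiplier $\lambda\ge0$ to the budget constraint and free multipliers $\mu_s$ to the marginal constraints. The dual is $\min\ \lambda\varepsilon+\sum_sP_s\mu_s$ subject to $\mu_s\ge X(s')-\lambda d(s',s)$ for all $s,s'$. For fixed $\lambda$ the optimal choice is $\mu_s=X^c_\lambda(s)$, which gives \eqref{eq:wevar-dual} with equality.

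The structural claims then follow in order. First, $\lambda\mapsto X^c_\lambda(s)$ is a pointwise maximum of affine functions of $\lambda$, so it is convex. Hence the objective $\lambda\varepsilon+\E_P[X^c_\lambda]$ is convex in $\lambda$, giving a one-dimensional convex program as in (G1). Second, set $L=\Lip_d(X)$. For $\lambda\ge L$ we have $X(s')-\lambda d(s',s)\le X(s)$, with equality at $s'=s$, so $X^c_\lambda=X$. On $[L,\infty)$ the objective is therefore $\lambda\varepsilon+\E_P[X]$, which is nondecreasing in $\lambda$, so the infimum is attained in $[0,L]$ and $\lambda^\star\le L$. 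Third, by \eqref{eq:wevar-dual}, $\varepsilon\mapsto\WEVaR_\varepsilon(X)$ is an infimum of affine functions of $\varepsilon$ with nonnegative slopes $\lambda$. It is therefore concave and nondecreasing; monotonicity is also immediate from the nesting of the balls.

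For the closed form \eqref{eq:wevar-closed}, taking $\lambda=L$ in the dual gives the upper bound $\WEVaR_\varepsilon(X)\le\E_P[X]+\varepsilon L$. For the matching lower bound, pick a pair $(s,s')$ attaining $L$. Because $d$ is symmetric and $L$ is defined with an absolute value, we may orient the pair so that $X(s')-X(s)=L\,d(s,s')$. Now build $Q$ by moving mass $\varepsilon/d(s,s')$ from $s$ to $s'$. This costs exactly $\varepsilon$ and raises the mean by exactly $\varepsilon L$. It is admissible provided $\varepsilon/d(s,s')\le P_s$. Accordingly I would define the saturation threshold as $\bar\varepsilon=\max\{P_s\,d(s,s'):(s,s')\text{ attains }L\text{ in the increasing direction}\}$. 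For $\varepsilon\le\bar\varepsilon$ the two bounds coincide. In the two-point case $\Sset=\{a,b\}$, the only pair attains $L$, and the increasing direction moves mass out of the low-loss point. The closed form then holds for all $\varepsilon$ up to $P_{\mathrm{low}}\,d(a,b)$. Beyond that radius all mass already sits at the worst point and $\WEVaR$ equals $\max X$.

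The step I expect to be the main obstacle is pinning down the saturation threshold honestly. If every maximizing pair has $P_s=0$ at its low-loss source, then $\bar\varepsilon=0$. In that case the slope at $\varepsilon=0^+$ is strictly smaller than $L$: it is the best ratio over sources carrying positive mass, i.e.\ the right derivative is $\max_{s\in\mathrm{supp}P,\,s'}(X(s')-X(s))/d(s,s')$. I would therefore state the closed form with $\bar\varepsilon$ as defined above. I would also remark that replacing $\Lip_d(X)$ by this support-restricted slope gives the correct general small-$\varepsilon$ formula. The two-point statement should be read as applying to $P$ with full support, or at least positive mass on the low-loss point.
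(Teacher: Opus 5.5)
Your proof is correct. The duality step, convexity of the $c$-transform in $\lambda$, $\lambda^\star\le L=\Lip_d(X)$ via $X^c_\lambda=X$ for $\lambda\ge L$, and concavity/monotonicity in $\varepsilon$ match the paper's argument.

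The closed form is where you take a genuinely different route. The paper stays in the dual. It certifies $\lambda=L$ as a minimizer of the convex objective through a one-sided derivative $\varepsilon-\bar\varepsilon(X)$, with $\bar\varepsilon(X)=\sum_sP(s)\,d(s,T(s))$. You instead sandwich: $\lambda=L$ is dual-feasible and gives the upper bound, and one mass shift along a single steepest pair is primal-feasible and gives the lower bound.

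Your route is more elementary and exhibits an explicit worst-case law. It also avoids the delicate points of the derivative computation. The relevant derivative of $\E_P[X^c_\lambda]$ is the \emph{left} one at $L$. And $T(s)$ must be the \emph{farthest} maximizer, since $s$ itself always attains the maximum there.

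What the paper's route buys is sharpness: for nonconstant $X$ the closed form fails for every $\varepsilon>\bar\varepsilon(X)$. Your threshold $\max P_s\,d(s,s')$ over steepest pairs is only sufficient and can be strictly smaller. For example, on $\{1,2,3\}$ with $d(i,j)=|i-j|$, $X=(0,1,2)$ and uniform $P$, yours is $2/3$, while the identity holds up to $\varepsilon=1$. You can recover the exact threshold within your own argument. Shift mass $\eta_s\le P(s)$ from every source $s$ to its farthest steepest target simultaneously; each unit of transport along such a pair still buys exactly $L$ units of mean.

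Your degenerate-case remark agrees with the paper, since there $\bar\varepsilon(X)=0$ as well. Your support-restricted slope is the correct general small-$\varepsilon$ rate, which the paper does not state. Take it over $s'\neq s$ and clip it at $0$; otherwise it has the wrong sign when $P$ is concentrated on $\arg\max X$.
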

\begin{proof}[Proof sketch]
Strong duality for Wasserstein DRO \citep{gao2023,blanchet2019} gives \eqref{eq:wevar-dual}; the $c$-transform $X^c_\lambda$ is the transport counterpart of the cumulant $\tfrac1t\ln\E_P e^{tX}$ in \eqref{eq:evar-primal}. Convexity in $\lambda$ holds because $X^c_\lambda$ is a pointwise maximum of affine functions of $\lambda$, hence convex, and $\E_P$ and the $\lambda\varepsilon$ term preserve convexity. For $\lambda\ge\Lip_d(X)$ the maximum is attained at $s'=s$, so the bracket equals $\E_P[X]$ and the objective is $\E_P[X]+\lambda\varepsilon$, minimized at $\lambda=\Lip_d(X)$; this gives \eqref{eq:wevar-closed} until transporting all displaceable mass to the maximizer of $X$ saturates the bound, after which the value rises concavely toward $\max_s X(s)$.
\end{proof}

Equations \eqref{eq:evar-primal} and \eqref{eq:wevar-dual} are the same template---$\inf$ over a one-dimensional dual variable of a \emph{smoothed} expectation plus radius-times-multiplier---with an entropic smoothing for the relative-entropy ball and a transport (inf-convolution) smoothing for the Wasserstein ball. The temperature $t$ and the transport price $\lambda$ are the same Lagrangian object on two ambiguity geometries.

\begin{proposition}[Coherence]\label{prop:coherent}
$\WEVaR_\varepsilon$ is a coherent risk measure for every $\varepsilon\ge0$.
\end{proposition}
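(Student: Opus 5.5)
The plan is to exploit the fact that \eqref{eq:wevar} writes $\WEVaR_\varepsilon$ as a supremum of linear functionals $X\mapsto\E_Q[X]$ over a fixed set
\[
\mathcal{Q}_\varepsilon=\{Q\in\simplex(\Sset):\Wass(Q,P)\le\varepsilon\},
\]
which does not depend on $X$. Every supremum of expectations over a nonempty set of probability measures is coherent, so the proof reduces to checking the four axioms one at a time. Each check is a pointwise property of $\E_Q$ that survives the supremum. First I record that $\mathcal{Q}_\varepsilon$ is nonempty, since $P\in\mathcal{Q}_\varepsilon$ because $\Wass(P,P)=0$. On a finite space, $\mathcal{Q}_\varepsilon$ is also a compact convex subset of the simplex, because $\Wass(\cdot,P)$ is convex and continuous. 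Hence the supremum is finite for bounded $X$ and is attained, and $\WEVaR_\varepsilon(X)\in[\min_s X(s),\max_s X(s)]$.

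The four axioms then follow directly.

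\emph{Monotonicity.} If $X\le Y$ pointwise, then $\E_Q[X]\le\E_Q[Y]$ for every $Q\in\mathcal{Q}_\varepsilon$, and taking suprema gives $\WEVaR_\varepsilon(X)\le\WEVaR_\varepsilon(Y)$.

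\emph{Translation invariance.} For a constant $m$, $\E_Q[X+m]=\E_Q[X]+m$ because each $Q$ is a probability measure with total mass one. The constant passes through the supremum.

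\emph{Positive homogeneity.} For $c\ge0$, $\E_Q[cX]=c\,\E_Q[X]$, and $\sup_Q c\,\E_Q[X]=c\sup_Q\E_Q[X]$. The case $c=0$ is handled by nonemptiness of $\mathcal{Q}_\varepsilon$.

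\emph{Subadditivity.} $\sup_Q\E_Q[X+Y]\le\sup_Q\E_Q[X]+\sup_Q\E_Q[Y]$, since each term on the left is bounded by the sum of the two suprema.

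The same argument also gives convexity and law-invariance with respect to $P$-relabelings that preserve $d$. These are not needed for coherence.

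As a cross-check, I would also verify the axioms through the dual \eqref{eq:wevar-dual} of Theorem~\ref{thm:dual}.
\begin{itemize}
\item $(X+m)^c_\lambda=X^c_\lambda+m$ gives translation invariance.
\item For $c>0$, $(cX)^c_{\lambda}=c\,X^c_{\lambda/c}$. Substituting $\lambda=c\mu$ gives homogeneity.
\item $(X+Y)^c_{\lambda_1+\lambda_2}\le X^c_{\lambda_1}+Y^c_{\lambda_2}$, obtained by splitting the maximum. Taking the infimum over $\lambda_1,\lambda_2$ gives subadditivity.
\item $X\le Y$ implies $X^c_\lambda\le Y^c_\lambda$, which gives monotonicity.
\end{itemize}
This mirrors how coherence of $\EVaR$ is usually read off \eqref{eq:evar-primal}.

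I do not expect a genuine obstacle. The only point needing care is the sign and loss convention: the paper treats $X$ as a loss and takes a supremum, so monotonicity is in the ``larger loss, larger risk'' direction and translation invariance has a $+m$ rather than a $-m$. I would state the axioms explicitly in this convention before verifying them. A second detail is that $\varepsilon=0$ should be included: there $\mathcal{Q}_0=\{P\}$ and $\WEVaR_0=\E_P$, which is trivially coherent and is consistent with the general argument.
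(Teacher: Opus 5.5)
Your proposal is correct and follows essentially the same route as the paper's proof (Appendix~B): write $\WEVaR_\varepsilon$ as the supremum of $\E_Q[X]$ over the nonempty, convex, compact ball $\{Q:\Wass(Q,P)\le\varepsilon\}$, check monotonicity, translation invariance, positive homogeneity and subadditivity for each fixed $Q$, and pass to the supremum. Your extra cross-check through the $c$-transform dual \eqref{eq:wevar-dual} is also valid, since each of the four relations you list holds, but it is not needed.
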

\begin{proof}[Proof sketch]
It is the support function of the convex, compact set $\{Q:\Wass(Q,P)\le\varepsilon\}$, hence positively homogeneous and subadditive; $P$ lies in the set, giving monotonicity; and adding a constant to $X$ shifts every $\E_Q[X]$ equally, giving translation invariance.
\end{proof}

\begin{theorem}[Hierarchy and relation to the entropic measure]\label{thm:hier}
On a finite metric space with diameter $D$:
\textbf{(i)} $\WEVaR_0(X)=\E_P[X]$ and $\WEVaR_\varepsilon(X)\uparrow\esssup(X)$ as $\varepsilon\uparrow D$, so $\WEVaR$ sweeps the full hierarchy;
\textbf{(ii)} (sandwich) $\EVaR_\alpha(X)\le\WEVaR_{\,D\sqrt{-\frac12\ln\alpha}}(X)$ for all $\alpha\in(0,1]$;
\textbf{(iii)} (catastrophe) if $P(s^\star)=0$ and $X(s^\star)>\E_P[X]$, then $\EVaR_\alpha(X)$ is independent of $X(s^\star)$ for every $\alpha$, whereas $\WEVaR_\varepsilon(X)$ strictly increases in $X(s^\star)$ once $\varepsilon>\mathrm{dist}_d(s^\star,\mathrm{supp}\,P)$.
\end{theorem}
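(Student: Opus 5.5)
For \textbf{(i)}, observe that $\Wass(Q,P)=0$ forces $Q=P$ (since $d$ is a metric), so the ball at $\varepsilon=0$ is $\{P\}$ and $\WEVaR_0(X)=\E_P[X]$. Monotonicity in $\varepsilon$ is already part of Theorem~\ref{thm:dual}. For the upper limit, interpret $\esssup(X)$ as $\max_{s\in\Sset}X(s)$, the relevant quantity once the ball can reach every state. Let $s_{\max}$ be a maximizer of $X$ and consider the mixtures $Q_\theta=(1-\theta)P+\theta\,\delta_{s_{\max}}$. The coupling that leaves the $(1-\theta)P$ part in place and ships the $\theta P$ part to $s_{\max}$ shows $\Wass(Q_\theta,P)\le\theta D$. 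Hence
\[
\WEVaR_{\theta D}(X)\ \ge\ (1-\theta)\E_P[X]+\theta\max_s X(s),
\]
and this tends to $\max_s X(s)$ as $\theta\uparrow1$. The reverse bound $\WEVaR_\varepsilon(X)\le\max_sX(s)$ is trivial, because every $\E_Q[X]$ is at most $\max_sX(s)$. Together these give the stated limit without needing a separate continuity argument at $\varepsilon=D$.

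For \textbf{(ii)}, I will prove the ball inclusion
\[
\{Q:\KL(Q\Vert P)\le-\ln\alpha\}\subseteq\{Q:\Wass(Q,P)\le D\sqrt{-\tfrac12\ln\alpha}\}
\]
and then take suprema in \eqref{eq:evar-dual} and \eqref{eq:wevar}. The inclusion chains two standard inequalities. First, $\Wass(Q,P)\le D\,\mathrm{TV}(Q,P)$: use the maximal coupling, which keeps the common mass $\min(Q_s,P_s)$ on the diagonal at zero cost and moves the remaining $\mathrm{TV}$ mass at cost at most $D$ per unit. Second, Pinsker's inequality $\mathrm{TV}(Q,P)\le\sqrt{\KL(Q\Vert P)/2}$. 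The value $\alpha=1$ is consistent, since both sides then reduce to $\E_P[X]$.

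For \textbf{(iii)}, the entropic half comes directly from the blind spot. Any $Q$ with $\KL(Q\Vert P)<\infty$ satisfies $Q\ll P$, so $Q(s^\star)=0$ (equivalently, use (G2) for the optimizer). Therefore every $\E_Q[X]$ in \eqref{eq:evar-dual}, and hence $\EVaR_\alpha(X)$, is independent of $X(s^\star)$. For the Wasserstein half, put $x=X(s^\star)$ and $f(x)=\WEVaR_\varepsilon(X)$. Then $f$ is a supremum of affine functions $x\mapsto a_Q+Q(s^\star)x$ with nonnegative slopes, so $f$ is convex and nondecreasing in $x$.

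Next I exhibit a feasible law that charges $s^\star$. Pick $s_0\in\mathrm{supp}\,P$ with $d(s^\star,s_0)=\delta:=\mathrm{dist}_d(s^\star,\mathrm{supp}\,P)$, and move mass $m=\min(P(s_0),\varepsilon/\delta)>0$ from $s_0$ to $s^\star$. The resulting law is in the ball, with $Q(s^\star)=m$. Since $\varepsilon>\delta$, in fact $m=P(s_0)$, so $f(x)\ge c+mx$ and $f(x)\to\infty$ as $x\to\infty$. A convex nondecreasing function on $\R$ that is unbounded above is strictly increasing wherever it exceeds its infimum. Moreover, the infimum is $\sup\{\E_Q[X]:Q(s^\star)=0,\ \Wass(Q,P)\le\varepsilon\}$, because for large negative $x$ only laws with $Q(s^\star)=0$ remain competitive. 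So strict increase holds on the set where the optimizer charges $s^\star$.

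The \textbf{main obstacle} is this last step. The condition $X(s^\star)>\E_P[X]$ alone does not rule out a competing state $s_1\in\mathrm{supp}\,P$ with $X(s_1)\gg X(s^\star)$ that absorbs the whole transport budget more cheaply. In that case $f$ is locally flat in $x$. I would first try to show that, whenever $X(s^\star)$ is the largest value reachable within budget, the optimizer must put mass on $s^\star$, via complementary slackness in \eqref{eq:wevar-dual}: the $c$-transform $X^c_{\lambda^\star}(s_0)$ picks $s'=s^\star$. Failing that, I would state (iii) in the sharpened form just described: $f$ is strictly increasing for all $x$ beyond the finite threshold where it leaves its infimum, and this threshold exists precisely because $\varepsilon>\delta$.
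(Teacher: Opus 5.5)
Your proposal is correct as far as the statement is true, and where it departs from the paper it is the more careful of the two. Part (ii) and the entropic half of (iii) are the paper's argument: Pinsker composed with $\Wass\le D\cdot\mathrm{TV}$ via the maximal coupling, and $Q\ll P$ whenever $\KL$ is finite.

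Part (i) differs slightly. The paper puts the point mass at the maximizer $s^\bullet$ into the ball once $\varepsilon\ge\sum_sP(s)d(s,s^\bullet)$, then invokes monotonicity. When that sum equals $D$ (e.g.\ $P$ a point mass at distance $D$ from the maximizer), this reaches the maximum only at $\varepsilon=D$. The limit from the left then tacitly uses continuity, which concavity in $\varepsilon$ supplies. Your mixtures $Q_\theta$ give an explicit lower bound and need no continuity.

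The Wasserstein half of (iii) also differs. The paper uses the same perturbation (its $\eta_0$ is your $m$) and then Danskin's theorem: the right derivative in $X(s^\star)$ equals $\max\{Q(s^\star):Q\text{ optimal}\}$, which is positive once $X(s^\star)$ is large. Your argument is more elementary: $f$ is convex, nondecreasing and unbounded above, hence strictly increasing wherever it exceeds its infimum. It also gives the exact shape of $f$: flat at $\sup\{\E_Q[X]:Q(s^\star)=0,\ \Wass(Q,P)\le\varepsilon\}$, then strictly increasing.

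The obstacle you flag is genuine, and it is a defect of the statement, not of your method. The paper's proof does not overcome it either: its last step passes from ``for $X(s^\star)$ large enough'' to ``whenever $\varepsilon>\delta$'' without justification. Here is a minimal counterexample. On the line take $s^\star=0$, $s_0=1$, $s_1=2$, with $P$ the point mass at $s_0$, $X(s_0)=0$, $X(s_1)=10$, and $\varepsilon=2>1=d(s_0,s^\star)$. The point mass at $s_1$ lies in the ball, so $\WEVaR_2(X)=10$ for every $X(s^\star)\in(0,10]$, although $X(s^\star)>\E_P[X]=0$ throughout. Your sharpened form (strict increase beyond a finite threshold) is exactly what either argument proves.

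Two corrections to your fallback discussion.

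\emph{First}, the refinement you would try first is also false: that the optimizer must charge $s^\star$ whenever $X(s^\star)$ is the largest reachable value. On the line take $s_2=0$, $s_1=3$, $s_0=10$, $s^\star=11$, with $P(s_2)=P(s_0)=\tfrac12$, $X(s_2)=-1000$, $X(s_1)=X(s_0)=0$, and $\varepsilon=1.2>1=\delta$.
\begin{itemize}
\item Moving mass $0.4$ from $s_2$ to $s_1$ exhausts the budget and gives value $-100$.
\item Taking $\lambda=1000/3$ in \eqref{eq:wevar-dual} also gives $-100$, which certifies $\WEVaR_{1.2}(X)=-100$ for all $X(s^\star)\le 1000/3$.
\item This range includes values where $s^\star$ is the unique maximizer of $X$.
\end{itemize}
What decides the question is the transport price, not the ordering of values. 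By complementary slackness, an optimal coupling can send mass from $s$ to $s^\star$ only if $X(s^\star)-\lambda^\star d(s,s^\star)=X^c_{\lambda^\star}(s)$.

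\emph{Second}, the threshold does not exist ``precisely because'' $\varepsilon>\delta$. Since $m=\min(P(s_0),\varepsilon/\delta)>0$ for every $\varepsilon>0$, eventual strict increase holds at every positive radius. The hypothesis $\varepsilon>\delta$ only buys $m=P(s_0)$.
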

\begin{proof}[Proof sketch]
(i) is immediate from finiteness. (ii): Pinsker gives total variation $\le\sqrt{\KL/2}$ and $\Wass\le D\cdot\mathrm{TV}$, so the relative-entropy ball of radius $-\ln\alpha$ is contained in the Wasserstein ball of radius $D\sqrt{-\tfrac12\ln\alpha}$; take suprema \citep{bobkov1999,boucheron2013}. (iii) is \emph{(G2)} versus the transport reach: no finite-radius relative-entropy ball contains a Wasserstein ball, and the gap is exactly the reachable-but-zero-probability catastrophes.
\end{proof}

\begin{figure}[t]
  \centering
  \includegraphics[width=0.94\linewidth]{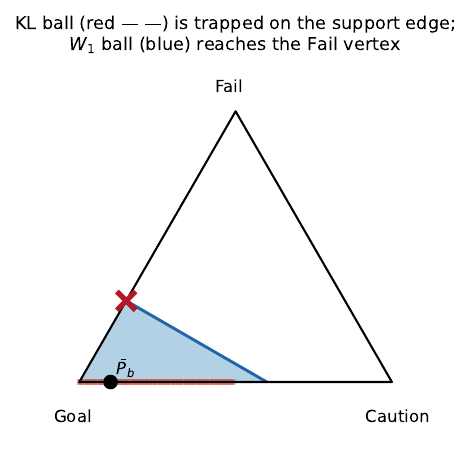}
  \caption{Why Wasserstein, geometrically. On the next-state simplex at a confident belief---the nominal $\Pbar_b$ places zero mass on \textsc{Fail}---the relative-entropy ball (red) is trapped on the support edge, blind to the catastrophe, while the $1$-Wasserstein ball (blue) reaches the \textsc{Fail} vertex; the worst-case adversary ($\times$) sits on its boundary. This is Thm.~\ref{thm:hier}(iii) drawn on $\simplex(\Sset)$.}\label{fig:klw1}
\end{figure}

\section{Empirical Guarantees}\label{sec:verify}
We verify both robust programs and their duals with Gurobi~12 on a five-state metric space ($X=[0,2,4,8,100]$, $d(s,s')=|s-s'|$). For the entropic measure we solve the relative-entropy-constrained program \eqref{eq:evar-dual} (a nonlinear program) and match it against the convex primal \eqref{eq:evar-primal}; for $\WEVaR$ we solve the transport linear program \eqref{eq:wevar} and match it against the Kantorovich--Rubinstein dual \eqref{eq:wevar-dual}.

Strong duality holds to solver tolerance: the entropic primal \eqref{eq:evar-primal} and the relative-entropy program \eqref{eq:evar-dual} agree to $1.2\times10^{-4}$, and the transport linear program agrees with its Kantorovich--Rubinstein dual to $9\times10^{-7}$. The closed form \eqref{eq:wevar-closed} is exact in the unsaturated regime (here $\varepsilon\le0.1$) and is otherwise upper-bounded by the exact one-dimensional dual, which the solver confirms. Across a sweep of radii both measures rise from the mean toward the worst case, with $\EVaR_\alpha\le\WEVaR$ at the Pinsker-matched radius (Thm.~\ref{thm:hier}(ii)). The catastrophe blind spot is decisive: with a zero-nominal-probability disaster whose loss we grow from $50$ to $1000$, the entropic value-at-risk changes by exactly $0$ at fixed confidence, while $\WEVaR$ at a fixed radius changes by $807.5$---the gap of Thm.~\ref{thm:hier}(iii), shown geometrically in Fig.~\ref{fig:klw1} and quantitatively across radii in Fig.~\ref{fig:dro} (App.~E).

\section{Evolving Uncertainty: Belief Entropy as the Radius}\label{sec:evolving}
We now let the radius \emph{evolve}. An environment has a hidden type $z\in\Zset$ drawn once; the agent maintains a Bayesian belief $b\in\simplex(\Zset)$ with update $\psi$, forms the belief-weighted nominal kernel $\Pbar_b(\cdot\mid s,a)=\sum_z b(z)P(\cdot\mid s,a,z)$, and sets the transport radius to the belief entropy, $\varepsilon(b)=\beta\,\Ent(b)$ with $\Ent(b)=-\sum_z b(z)\ln b(z)$ and sensitivity $\beta>0$, giving the ambiguity set $\Uset(b)=\{Q:\Wass(Q,\Pbar_b)\le\beta\Ent(b)\}$. As observations sharpen $b$, the entropy $\Ent(b)\downarrow0$, the ball contracts and drifts toward the nominal (Fig.~\ref{fig:balls}), and via Thm.~\ref{thm:dual} the agent's risk attitude slides from worst-case to risk-neutral---a dial driven by epistemic state. Since the controller minimizes value, the relevant functional is the infimal twin of Def.~\ref{def:wevar}, whose closed form is $\E_{\Pbar_b}[V]-\beta\Ent(b)\,\Lip_d(V)$.

\begin{figure}[t]
  \centering
  \includegraphics[width=\linewidth]{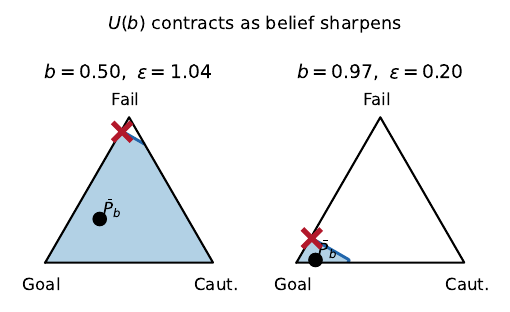}
  \caption{The entropy-modulated ambiguity set $\Uset(b)$ on the next-state simplex. As the belief sharpens the radius $\varepsilon=\beta\Ent(b)$ contracts ($1.04\!\to\!0.20$) and the nominal $\Pbar_b$ drifts toward the goal; the worst-case adversary ($\times$) loses its reach toward the failure vertex---robustness that vanishes with uncertainty (the geometric content of Thm.~\ref{thm:bellman}).}\label{fig:balls}
\end{figure}

\begin{theorem}[Closed-form robust update; contraction; safety]\label{thm:bellman}
For the operator, with discount $\gamma\in(0,1]$ and belief update $b'=\psi(b,s,a,s')$,
\[(\mathfrak{T}V)(s,b)=\max_{a}\big[r(s,a)+\gamma\!\inf_{Q\in\Uset(b)}\!\E_{Q}\,V(s',b')\big]:\]
\textbf{(i)} the inner problem has the closed form of Thm.~\ref{thm:dual}---no coupling linear program---reducing the per-$(s,a,b)$ cost to $O(|\Sset|^2)$;
\textbf{(ii)} $\mathfrak{T}$ is a $\gamma$-contraction in $\|\cdot\|_\infty$ when $\gamma<1$, and a contraction on proper MDPs when $\gamma=1$; either way it has a unique fixed point $V^\star$ \citep{bertsekas1996};
\textbf{(iii)} $V^\star$ obeys a safety sandwich $V^\star_{\mathrm{wc}}(s,b)\le V^\star(s,b)\le\E_{z\sim b}[V^\star_{\mathrm{opt}}(s,z)]$ between the always-maximally-cautious value and the type-aware oracle, and as $\Ent(b)\to0$ the radius vanishes and $V^\star(s,b)\to V^\star_{\mathrm{opt}}(s,z^\star)$.
\end{theorem}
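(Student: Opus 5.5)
The plan is to prove the three parts in order. Each part reduces to a result already in hand: Thm.~\ref{thm:dual} for (i), the nonexpansiveness of the coherent functional of Prop.~\ref{prop:coherent} for (ii), and set inclusion of ambiguity sets plus continuity for (iii).

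For \textbf{(i)}, fix $(s,a,b)$ and set $W(s')=V(s',\psi(b,s,a,s'))$. This is a fixed function on the finite set $\Sset$, because the belief update is deterministic given $s'$. The inner infimum is then the infimal twin of Def.~\ref{def:wevar}, namely $-\WEVaR_{\beta\Ent(b)}(-W)$ under the nominal $\Pbar_b(\cdot\mid s,a)$. I will not rely on the closed form \eqref{eq:wevar-closed}, since it holds only below saturation. Instead I will use the one-dimensional dual \eqref{eq:wevar-dual}, which is an exact formula.
\begin{itemize}
\item Evaluating the $c$-transform $X^c_\lambda(s)=\max_{s'}(\cdot)$ for all $s$ costs $O(|\Sset|^2)$.
\item The objective in \eqref{eq:wevar-dual} is convex and piecewise linear in $\lambda$, with breakpoints among the ratios $|W(s)-W(s')|/d(s,s')$. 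The minimum is therefore attained at a breakpoint or at $\lambda=\Lip_d(W)$.
\item In the unsaturated regime the value is just $\E_{\Pbar_b}[W]-\beta\Ent(b)\Lip_d(W)$, and computing $\Lip_d(W)$ is an $O(|\Sset|^2)$ scan.
\end{itemize}
I would state (i) in this hedged form: $O(|\Sset|^2)$ with the closed form, and a small one-dimensional search otherwise. In either case no coupling LP is needed.

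For \textbf{(ii)}, the standard argument applies. The map $W\mapsto\inf_{Q\in\Uset(b)}\E_Q W$ is an infimum of expectations under probability measures, so it is monotone and satisfies
\[
\Big|\inf_Q\E_QW_1-\inf_Q\E_QW_2\Big|\le\|W_1-W_2\|_\infty .
\]
The maximum over $a$ is also $1$-Lipschitz in sup-norm. Together these give $\|\mathfrak{T}V_1-\mathfrak{T}V_2\|_\infty\le\gamma\|V_1-V_2\|_\infty$, and Banach's theorem yields the unique fixed point. For $\gamma=1$ I would invoke the proper-policy (stochastic shortest path) weighted-norm contraction of \citet{bertsekas1996}. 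That requires that every policy terminates under every $Q\in\Uset(b)$, and I would take this as the paper's standing meaning of ``proper''.

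For \textbf{(iii)}, I would set up a monotone comparison of operators, recalling that $\mathfrak{T}$ minimizes over $Q$.
\begin{itemize}
\item \textbf{Lower bound.} Let $\mathfrak{T}_{\mathrm{wc}}$ use the maximal ball, radius $\beta\max\Ent=\beta\ln|\Zset|$ or the whole simplex. Since $\Uset(b)$ is contained in that ball, $\mathfrak{T}_{\mathrm{wc}}V\le\mathfrak{T}V$ pointwise. Monotonicity and iteration then give $V^\star_{\mathrm{wc}}\le V^\star$.
\item \textbf{Upper bound.} The nominal $\Pbar_b$ lies in $\Uset(b)$, so $\mathfrak{T}V\le\mathfrak{T}_{\mathrm{Bayes}}V$, and hence $V^\star$ is at most the Bayes-adaptive value. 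The Bayes-adaptive value is in turn at most $\E_{z\sim b}[V^\star_{\mathrm{opt}}(s,z)]$, because an agent that knows $z$ can do no worse than one that must infer it (value of information). This last step needs an explicit coupling argument: any belief-dependent policy is a feasible policy within each type-$z$ MDP, so averaging over $z$ gives the bound.
\item \textbf{Limit.} As $\Ent(b)\to0$, $b\to\delta_{z^\star}$ and the radius tends to $0$. The upper and lower ends then both approach $V^\star_{\mathrm{opt}}(s,z^\star)$, using continuity of the fixed point in the operator: a perturbation of size $\delta$ moves the fixed point by at most $\delta/(1-\gamma)$.
\end{itemize}
I expect this limit step to be the main obstacle. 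Future beliefs can regain entropy, so the radius is not uniformly small along trajectories. My first attempt is to observe that from a Dirac belief the update $\psi$ keeps $b'=\delta_{z^\star}$, so the radius is identically $0$ on that invariant set. I would then use continuity of $\psi$ and of $V^\star$ in $b$, on a finite horizon or discounted, to pass to the limit.
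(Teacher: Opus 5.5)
Your parts (i), (ii) and both inequalities of the sandwich in (iii) follow the paper's proof:
\begin{itemize}
\item for (i), Thm.~\ref{thm:dual} applied to $-W$ (your caveat that the $O(|\Sset|^2)$ count holds only below saturation is warranted; the paper likewise falls back on the scalar dual);
\item for (ii), monotonicity plus constant shift, which is equivalent to your nonexpansiveness bound, with the stochastic-shortest-path argument at $\gamma=1$;
\item for the sandwich, $\Pbar_b\in\Uset(b)$ plus value of information, and ball inclusion at radius $\beta\ln|\Zset|$.
\end{itemize}
There is a small slip in (i). The kinks of $\lambda\mapsto\E_{\Pbar_b}[(-W)^c_\lambda]$ occur where two maximizers cross, at $\lambda=(W(s'')-W(s'))/(d(s,s')-d(s,s''))$. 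These need not be pairwise slopes of $W$, and $\lambda=0$ must also be a candidate. This does not affect the conclusion.

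The genuine gap is the limit $V^\star(s,b)\to V^\star_{\mathrm{opt}}(s,z^\star)$.
\begin{itemize}
\item The sandwich cannot be squeezed. $V^\star_{\mathrm{wc}}$ uses radius $\beta\ln|\Zset|$ at \emph{every} belief, so it does not tend to $V^\star_{\mathrm{opt}}(s,z^\star)$.
\item The $\delta/(1-\gamma)$ fixed-point perturbation bound needs a perturbation uniform in $(s,b)$. That fails because the radius is large away from the vertex, and the bound is unavailable at $\gamma=1$.
\item Your observation that $V^\star(s,\delta_{z^\star})=V^\star_{\mathrm{opt}}(s,z^\star)$ by invariance is correct. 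But the fallback is circular: continuity of $V^\star$ at $\delta_{z^\star}$ is exactly the claim.
\item $\psi$ is not continuous there. If $P(s'\mid s,a,z^\star)=0<P(s'\mid s,a,z)$, then $\psi(b,s,a,s')$ puts no mass on $z^\star$ for every $b$ with $b(z)>0$. The transport adversary can place mass proportional to $\beta\Ent(b)$ on exactly such $s'$; this is the reachability the paper advertises.
\end{itemize}
The paper's own proof of this step is only the one-step bound $0\le\WEVaR_\varepsilon-\WEVaR_0\le\varepsilon\Lip_d$ at the current belief, which likewise ignores future beliefs. So your diagnosis is sharper than the paper's, but your repair does not close the step.

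A supersolution argument does close it, for $\gamma<1$. Set
\begin{itemize}
\item $U=V^\star_{\mathrm{opt}}(\cdot,z^\star)$, $M=\max|r|/(1-\gamma)$, $d_{\min}=\min_{s\ne s'}d(s,s')$, $g(b)=1-b(z^\star)$;
\item the concave potential $\Phi(b)=\min\{K\sqrt{g(b)},L\}$.
\end{itemize}
Four facts combine:
\begin{itemize}
\item The posterior is a martingale under $\Pbar_b$, so Jensen gives $\E_{\Pbar_b}[\Phi(b')]\le\Phi(b)$.
\item Kantorovich--Rubinstein bounds the adversary's effect on $U-\Phi\circ\psi$ by $\beta\Ent(b)(2M+L)/d_{\min}$.
\item $\E_{\Pbar_b}U\ge\E_{P(\cdot\mid s,a,z^\star)}U-2Mg$.
\item $\Ent(b)\le g\ln(e|\Zset|/g)=o(\sqrt g)$.
\end{itemize}
First choose $L$ large, so that on $\{\Phi=L\}$ the function $U-L$ lies below $-\max|r|-\gamma(M+L)\le\mathfrak{T}(U-\Phi)$. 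Then choose $K$ large. Playing the $z^\star$-optimal action gives $\mathfrak{T}(U-\Phi)\ge U-\Phi$, hence $V^\star\ge U-\Phi$ by monotone iteration. Combined with the upper end of the sandwich, which does pinch, this yields $|V^\star(s,b)-U(s)|\le K\sqrt{g}+2Mg\to0$.
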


Since $\Ent(b)\le\ln|\Zset|$, the worst-case radius is $\beta\ln|\Zset|$; requiring one action to remain viable under maximal ignorance gives the calibration $\beta<D/\ln|\Zset|$, the transport analogue of choosing a confidence level.

\begin{figure}[!htb]
  \centering
  \includegraphics[width=\linewidth]{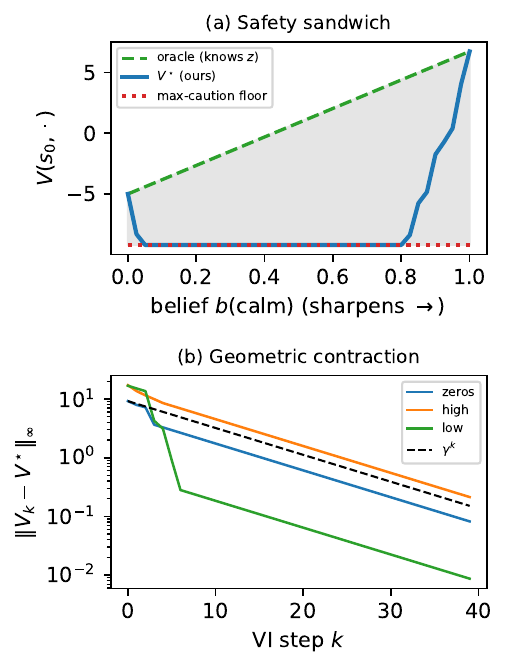}
  \caption{Computed guarantees on a five-state corridor with a hidden slip type ($\gamma=0.9$, Gurobi-checked). \textbf{(a)} The adaptive value $V^\star(s_0,b)$ stays inside the safety sandwich between the always-maximally-cautious floor $V_{\mathrm{wc}}$ and the type-aware oracle ceiling; the price of robustness shrinks as the belief sharpens. \textbf{(b)} From arbitrary initializations, value iteration with the closed-form operator collapses geometrically onto $V^\star$ along the $\gamma^k$ envelope, certifying the bound at every iterate.}\label{fig:sandwich}
\end{figure}

\paragraph{Evolving safety switch.}
On the Ambiguous Bridge ($\gamma=1$), \textsc{Sprint} reaches the goal ($+100$) under a benign type but falls to catastrophe ($-1000$) under an adversarial type, while \textsc{Crawl} is always safe but costly. With belief $b=[\alpha,1-\alpha]$ and $d(S_G,S_F)=1$, so $\Lip_d(V)=1100$, Thm.~\ref{thm:bellman}(i) gives in closed form $Q(\textsc{Sprint})=-1+1100(\alpha-\beta\Ent(\alpha))^{+}-1000$ and $Q(\textsc{Crawl})=80-1100\min(\beta\Ent(\alpha),1)$, which a Gurobi transport solve reproduces to $10^{-13}$. Equating them (the entropy terms cancel) yields a safety switch independent of $\beta$: the agent \textsc{Crawl}s until $\alpha\ge\alpha^\star=1081/1100\approx0.983$, then \textsc{Sprint}s. The threshold is set purely by the reward asymmetry. The same creep-then-cruise switch governs the canyon of \S\ref{sec:intro}: the drone hovers until its confidence that the air is calm crosses $\approx0.63$, then cruises (Fig.~\ref{fig:switch}, App.~E).

\paragraph{Guarantees, illustrated.}
On a five-state corridor whose forward action slips to a failure state only under a hidden ``storm'' type (Fig.~\ref{fig:sandwich}), the safety sandwich of Thm.~\ref{thm:bellman}(iii) holds at every belief: the adaptive value never drops below the always-maximally-cautious floor and never exceeds the type-aware oracle. The price of robustness is \emph{temporary}---the ceiling gap falls from $8.4$ at the uniform belief to $4.4$ once the belief reaches $b(\mathrm{calm})=0.95$, and to zero at identification. Value iteration with the closed-form operator converges from arbitrary initializations at empirical rate $0.90=\gamma$, and the closed-form inner update matches a Gurobi transport solve at every sampled belief (the underlying duality is verified to solver tolerance in \S\ref{sec:verify}), so a valid safety bound is available at every iterate, not only at convergence. The belief-simplex manifolds (Fig.~\ref{fig:manifold}) and the safety-vs-efficiency rollouts (Fig.~\ref{fig:pareto}) appear in App.~E.

\section{Discussion}\label{sec:disc}
The entropic and Wasserstein robust risks are the relative-entropy and optimal-transport faces of one idea---a coherent risk whose ambiguity radius is the agent's epistemic uncertainty---bridged by entropic optimal transport \citep{cuturi2013,peyre2019}; scaling the closed-form operator to continuous spaces via Lipschitz critics and to active information gathering are the natural next steps.

\bibliography{wevar}

\begin{thebibliography}{30}
\providecommand{\natexlab}[1]{#1}
\providecommand{\url}[1]{\texttt{#1}}
\expandafter\ifx\csname urlstyle\endcsname\relax
  \providecommand{\doi}[1]{doi: #1}\else
  \providecommand{\doi}{doi: \begingroup \urlstyle{rm}\Url}\fi

\bibitem[Abdullah et~al.(2019)Abdullah, Ren, Ammar, Milenkovic, Luo, Zhang, and
  Wang]{abdullah2019}
Mohammed~Amin Abdullah, Hang Ren, Haitham~Bou Ammar, Vladimir Milenkovic, Rui
  Luo, Mingtian Zhang, and Jun Wang.
\newblock Wasserstein robust reinforcement learning.
\newblock \emph{arXiv preprint arXiv:1907.13196}, 2019.

\bibitem[Ahmadi-Javid(2012)]{ahmadijavid2012}
Amir Ahmadi-Javid.
\newblock Entropic value-at-risk: A new coherent risk measure.
\newblock \emph{Journal of Optimization Theory and Applications}, 155\penalty0
  (3):\penalty0 1105--1123, 2012.

\bibitem[Altman(1999)]{altman1999}
Eitan Altman.
\newblock \emph{Constrained {Markov} Decision Processes}.
\newblock Chapman \& Hall/CRC, 1999.

\bibitem[Artzner et~al.(1999)Artzner, Delbaen, Eber, and Heath]{artzner1999}
Philippe Artzner, Freddy Delbaen, Jean-Marc Eber, and David Heath.
\newblock Coherent measures of risk.
\newblock \emph{Mathematical Finance}, 9\penalty0 (3):\penalty0 203--228, 1999.

\bibitem[Bertsekas and Tsitsiklis(1996)]{bertsekas1996}
Dimitri~P. Bertsekas and John~N. Tsitsiklis.
\newblock \emph{Neuro-Dynamic Programming}.
\newblock Athena Scientific, 1996.

\bibitem[Blanchet and Murthy(2019)]{blanchet2019}
Jose Blanchet and Karthyek Murthy.
\newblock Quantifying distributional model risk via optimal transport.
\newblock \emph{Mathematics of Operations Research}, 44\penalty0 (2):\penalty0
  565--600, 2019.

\bibitem[Bobkov and G{\"o}tze(1999)]{bobkov1999}
Sergey~G. Bobkov and Friedrich G{\"o}tze.
\newblock Exponential integrability and transportation cost related to
  logarithmic {Sobolev} inequalities.
\newblock \emph{Journal of Functional Analysis}, 163\penalty0 (1):\penalty0
  1--28, 1999.

\bibitem[Boucheron et~al.(2013)Boucheron, Lugosi, and Massart]{boucheron2013}
St{\'e}phane Boucheron, G{\'a}bor Lugosi, and Pascal Massart.
\newblock \emph{Concentration Inequalities: A Nonasymptotic Theory of
  Independence}.
\newblock Oxford University Press, 2013.

\bibitem[Brunke et~al.(2022)Brunke, Greeff, Hall, Yuan, Zhou, Panerati, and
  Schoellig]{brunke2022}
Lukas Brunke, Melissa Greeff, Adam~W. Hall, Zhaocong Yuan, Siqi Zhou, Jacopo
  Panerati, and Angela~P. Schoellig.
\newblock Safe learning in robotics: From learning-based control to safe
  reinforcement learning.
\newblock \emph{Annual Review of Control, Robotics, and Autonomous Systems},
  5:\penalty0 411--444, 2022.

\bibitem[Chow et~al.(2015)Chow, Tamar, Mannor, and Pavone]{chow2015}
Yinlam Chow, Aviv Tamar, Shie Mannor, and Marco Pavone.
\newblock Risk-sensitive and robust decision-making: A {CVaR} optimization
  approach.
\newblock In \emph{Advances in Neural Information Processing Systems
  (NeurIPS)}, 2015.

\bibitem[Cuturi(2013)]{cuturi2013}
Marco Cuturi.
\newblock Sinkhorn distances: Lightspeed computation of optimal transport.
\newblock In \emph{Advances in Neural Information Processing Systems
  (NeurIPS)}, 2013.

\bibitem[Delage and Mannor(2010)]{delage2010}
Erick Delage and Shie Mannor.
\newblock Percentile optimization for {Markov} decision processes with
  parameter uncertainty.
\newblock \emph{Operations Research}, 58\penalty0 (1):\penalty0 203--213, 2010.

\bibitem[Ganguly et~al.(2025)Ganguly, Girotra, Sekhar, and
  Joseph]{ganguly2025evar}
Deep Ganguly, Sarthak Girotra, Sirish Sekhar, and Ajin~George Joseph.
\newblock Risk-seeking reinforcement learning via multi-timescale entropic
  value-at-risk optimization.
\newblock \emph{Transactions on Machine Learning Research}, 2025.

\bibitem[Gao and Kleywegt(2023)]{gao2023}
Rui Gao and Anton~J. Kleywegt.
\newblock Distributionally robust stochastic optimization with {Wasserstein}
  distance.
\newblock \emph{Mathematics of Operations Research}, 48\penalty0 (2):\penalty0
  603--655, 2023.

\bibitem[Garc{\'i}a and Fern{\'a}ndez(2015)]{garcia2015}
Javier Garc{\'i}a and Fernando Fern{\'a}ndez.
\newblock A comprehensive survey on safe reinforcement learning.
\newblock \emph{Journal of Machine Learning Research}, 16:\penalty0 1437--1480,
  2015.

\bibitem[Ghavamzadeh et~al.(2015)Ghavamzadeh, Mannor, Pineau, and
  Tamar]{ghavamzadeh2015}
Mohammad Ghavamzadeh, Shie Mannor, Joelle Pineau, and Aviv Tamar.
\newblock Bayesian reinforcement learning: A survey.
\newblock \emph{Foundations and Trends in Machine Learning}, 8\penalty0
  (5--6):\penalty0 359--483, 2015.

\bibitem[Hau et~al.(2023)Hau, Petrik, and Ghavamzadeh]{hau2023}
Jia~Lin Hau, Marek Petrik, and Mohammad Ghavamzadeh.
\newblock Entropic risk optimization in discounted {MDP}s.
\newblock In \emph{International Conference on Artificial Intelligence and
  Statistics (AISTATS)}, 2023.

\bibitem[Howard and Matheson(1972)]{howard1972}
Ronald~A. Howard and James~E. Matheson.
\newblock Risk-sensitive {Markov} decision processes.
\newblock \emph{Management Science}, 18\penalty0 (7):\penalty0 356--369, 1972.

\bibitem[Iyengar(2005)]{iyengar2005}
Garud~N. Iyengar.
\newblock Robust dynamic programming.
\newblock \emph{Mathematics of Operations Research}, 30\penalty0 (2):\penalty0
  257--280, 2005.

\bibitem[Kaelbling et~al.(1998)Kaelbling, Littman, and
  Cassandra]{kaelbling1998}
Leslie~Pack Kaelbling, Michael~L. Littman, and Anthony~R. Cassandra.
\newblock Planning and acting in partially observable stochastic domains.
\newblock \emph{Artificial Intelligence}, 101\penalty0 (1--2):\penalty0
  99--134, 1998.

\bibitem[Kuhn et~al.(2019)Kuhn, Mohajerin~Esfahani, Nguyen, and
  Shafieezadeh-Abadeh]{kuhn2019}
Daniel Kuhn, Peyman Mohajerin~Esfahani, Viet~Anh Nguyen, and Soroosh
  Shafieezadeh-Abadeh.
\newblock {Wasserstein} distributionally robust optimization: Theory and
  applications in machine learning.
\newblock In \emph{Operations Research \& Management Science in the Age of
  Analytics (INFORMS TutORials)}, pages 130--166. INFORMS, 2019.

\bibitem[Mohajerin~Esfahani and Kuhn(2018)]{esfahani2018}
Peyman Mohajerin~Esfahani and Daniel Kuhn.
\newblock Data-driven distributionally robust optimization using the
  {Wasserstein} metric: Performance guarantees and tractable reformulations.
\newblock \emph{Mathematical Programming}, 171:\penalty0 115--166, 2018.

\bibitem[Nilim and El~Ghaoui(2005)]{nilim2005}
Arnab Nilim and Laurent El~Ghaoui.
\newblock Robust control of {Markov} decision processes with uncertain
  transition matrices.
\newblock \emph{Operations Research}, 53\penalty0 (5):\penalty0 780--798, 2005.

\bibitem[Peyr{\'e} and Cuturi(2019)]{peyre2019}
Gabriel Peyr{\'e} and Marco Cuturi.
\newblock Computational optimal transport.
\newblock \emph{Foundations and Trends in Machine Learning}, 11\penalty0
  (5--6):\penalty0 355--607, 2019.

\bibitem[Pinto et~al.(2017)Pinto, Davidson, Sukthankar, and Gupta]{pinto2017}
Lerrel Pinto, James Davidson, Rahul Sukthankar, and Abhinav Gupta.
\newblock Robust adversarial reinforcement learning.
\newblock In \emph{International Conference on Machine Learning (ICML)}, 2017.

\bibitem[Rockafellar and Uryasev(2000)]{rockafellar2000}
R.~Tyrrell Rockafellar and Stanislav Uryasev.
\newblock Optimization of conditional value-at-risk.
\newblock \emph{Journal of Risk}, 2:\penalty0 21--42, 2000.

\bibitem[Ruszczy{\'n}ski(2010)]{ruszczynski2010}
Andrzej Ruszczy{\'n}ski.
\newblock Risk-averse dynamic programming for {Markov} decision processes.
\newblock \emph{Mathematical Programming}, 125\penalty0 (2):\penalty0 235--261,
  2010.

\bibitem[Sui et~al.(2015)Sui, Gotovos, Burdick, and Krause]{sui2015}
Yanan Sui, Alkis Gotovos, Joel Burdick, and Andreas Krause.
\newblock Safe exploration for optimization with {Gaussian} processes.
\newblock In \emph{International Conference on Machine Learning (ICML)}, 2015.

\bibitem[Villani(2009)]{villani2009}
C{\'e}dric Villani.
\newblock \emph{Optimal Transport: Old and New}.
\newblock Springer, 2009.

\bibitem[Wiesemann et~al.(2013)Wiesemann, Kuhn, and Rustem]{wiesemann2013}
Wolfram Wiesemann, Daniel Kuhn, and Ber{\c{c}} Rustem.
\newblock Robust {Markov} decision processes.
\newblock \emph{Mathematics of Operations Research}, 38\penalty0 (1):\penalty0
  153--183, 2013.

\end{thebibliography}

\clearpage
\onecolumn
\appendix
\section*{Appendix: Extended Proofs}
Throughout, $\Sset$ is finite with $|\Sset|=n$; $d$ is a metric on $\Sset$ with diameter $D=\max_{s,s'}d(s,s')$; $P\in\simplex(\Sset)$ is the reference law; $X:\Sset\to\R$ is a bounded loss; and $\Wass(Q,P)=\min_{\gamma\in\Gamma(Q,P)}\sum_{s,s'}d(s,s')\,\gamma(s,s')$ with $\Gamma(Q,P)$ the couplings of $Q$ and $P$. We write $\E_Q[X]=\sum_s Q(s)X(s)$.

\subsection*{A.\;\;Theorem~\ref{thm:dual}}

\paragraph{(a) Duality.}
Since $\Wass(Q,P)$ is itself a minimum over couplings, the worst-case expectation is the value of a single linear program over the coupling $\gamma\ge0$ whose first marginal is fixed to $P$:
\begin{equation}\label{eq:appA-primal}
\WEVaR_\varepsilon(X)=\max_{\gamma\ge0}\ \sum_{s,s'}\gamma(s,s')\,X(s')
\quad\text{s.t.}\quad
\sum_{s'}\gamma(s,s')=P(s)\ \ \forall s,\qquad
\sum_{s,s'}d(s,s')\,\gamma(s,s')\le\varepsilon ,
\end{equation}
where the perturbed law is the second marginal $Q(s')=\sum_s\gamma(s,s')$ (automatically in $\simplex(\Sset)$ because $\sum_{s,s'}\gamma=\sum_sP(s)=1$), and the objective equals $\E_Q[X]$. The program is feasible (take $\gamma(s,s')=P(s)\mathbf 1[s'=s]$, of transport cost $0\le\varepsilon$) and bounded ($X$ is bounded), so LP strong duality applies. Attaching a free multiplier $u(s)$ to each marginal equality and $\lambda\ge0$ to the budget, the Lagrangian is
\[
\sum_s P(s)u(s)+\lambda\varepsilon+\sum_{s,s'}\gamma(s,s')\big(X(s')-u(s)-\lambda\,d(s,s')\big).
\]
Its supremum over $\gamma\ge0$ is finite iff $X(s')-u(s)-\lambda\,d(s,s')\le0$ for all $s,s'$, i.e. $u(s)\ge\max_{s'}\big(X(s')-\lambda\,d(s,s')\big)=X^{c}_\lambda(s)$, in which case the supremum equals $\sum_sP(s)u(s)+\lambda\varepsilon$ (attained at $\gamma=0$). Minimizing over feasible $u$ sets $u(s)=X^{c}_\lambda(s)$, giving
\[
\WEVaR_\varepsilon(X)=\min_{\lambda\ge0}\Big\{\lambda\varepsilon+\textstyle\sum_sP(s)\,X^{c}_\lambda(s)\Big\}=\inf_{\lambda\ge0}\big\{\lambda\varepsilon+\E_P[X^{c}_\lambda]\big\},
\]
which is \eqref{eq:wevar-dual}. (For general Polish spaces this is Kantorovich--Rubinstein/Wasserstein-DRO strong duality, \citealp{villani2009,gao2023,blanchet2019}.)

\paragraph{(b) Convexity and $\lambda^\star\le\Lip_d(X)$.}
For each fixed $s$, $\lambda\mapsto X^{c}_\lambda(s)=\max_{s'}\big(X(s')-\lambda\,d(s,s')\big)$ is a pointwise maximum of functions affine in $\lambda$, hence convex; therefore $\E_P[X^{c}_\lambda]=\sum_sP(s)X^{c}_\lambda(s)$ is convex (nonnegative combination), and $\phi(\lambda):=\lambda\varepsilon+\E_P[X^{c}_\lambda]$ is convex on $[0,\infty)$, so \eqref{eq:wevar-dual} is a well-posed one-dimensional convex program. Next, if $\lambda\ge\Lip_d(X)$ then for every $s,s'$, $X(s')-X(s)\le\Lip_d(X)\,d(s,s')\le\lambda\,d(s,s')$, so $X(s')-\lambda\,d(s,s')\le X(s)$ with equality at $s'=s$; hence $X^{c}_\lambda(s)=X(s)$ and $\phi(\lambda)=\lambda\varepsilon+\E_P[X]$, which is strictly increasing in $\lambda$. A convex $\phi$ that is increasing on $[\Lip_d(X),\infty)$ attains its minimum at some $\lambda^\star\le\Lip_d(X)$.

\paragraph{(c) Monotonicity and concavity in $\varepsilon$.}
The feasible set $\{Q:\Wass(Q,P)\le\varepsilon\}$ grows with $\varepsilon$, so $\WEVaR_\varepsilon(X)$ is nondecreasing. By \eqref{eq:wevar-dual} it is an infimum over $\lambda$ of the maps $\varepsilon\mapsto\lambda\varepsilon+\E_P[X^c_\lambda]$, each affine in $\varepsilon$; an infimum of affine functions is concave.

\paragraph{(d) Closed form and saturation.}
Let $T(s)\in\arg\max_{s'}\big(X(s')-\Lip_d(X)\,d(s,s')\big)$ be a steepest-ascent target and set
\(
\bar\varepsilon(X):=\E_P\big[d(s,T(s))\big]=\sum_sP(s)\,d\big(s,T(s)\big).
\)
The right derivative of the convex function $\E_P[X^c_\lambda]$ at $\lambda=\Lip_d(X)$ equals $-\sum_sP(s)\,d(s,T(s))=-\bar\varepsilon(X)$ (each active term contributes the negative source--target distance, inactive terms contribute $0$). Hence the left derivative of $\phi$ at $\Lip_d(X)$ is $\varepsilon-\bar\varepsilon(X)$, which is $\le0$ iff $\varepsilon\le\bar\varepsilon(X)$; in that regime the minimizer is $\lambda^\star=\Lip_d(X)$ and
\[
\WEVaR_\varepsilon(X)=\Lip_d(X)\,\varepsilon+\E_P[X^{c}_{\Lip_d(X)}]=\E_P[X]+\varepsilon\,\Lip_d(X).
\]
For $|\mathrm{supp}(P)\cup\{T(s)\}|=2$ there is a single transport channel, $\bar\varepsilon(X)=\Wass(\delta_{\arg\max X},\,\cdot)$ exhausts only at a vertex, and the formula is exact up to that point. \hfill$\square$

\subsection*{B.\;\;Proposition~\ref{prop:coherent} (Coherence)}
Write $\rho(X)=\WEVaR_\varepsilon(X)=\sup_{Q\in B}\E_Q[X]$ with $B=\{Q:\Wass(Q,P)\le\varepsilon\}$, a nonempty ($P\in B$), convex, compact subset of $\simplex(\Sset)$. \emph{Monotonicity:} if $X\le Y$ pointwise then $\E_Q[X]\le\E_Q[Y]$ for every $Q\in B$ (as $Q\ge0$), so $\rho(X)\le\rho(Y)$. \emph{Translation invariance:} for $c\in\R$, $\E_Q[X+c]=\E_Q[X]+c$ for all $Q$, hence $\rho(X+c)=\rho(X)+c$. \emph{Positive homogeneity:} for $t\ge0$, $\rho(tX)=\sup_{Q\in B}t\,\E_Q[X]=t\,\rho(X)$. \emph{Subadditivity:} $\rho(X+Y)=\sup_{Q\in B}\big(\E_Q[X]+\E_Q[Y]\big)\le\sup_{Q\in B}\E_Q[X]+\sup_{Q\in B}\E_Q[Y]=\rho(X)+\rho(Y)$. These are the axioms of \citet{artzner1999}; $B$ is the risk envelope of the induced dual representation. \hfill$\square$

\subsection*{C.\;\;Theorem~\ref{thm:hier}}

\paragraph{(i) Sweep from mean to worst case.}
$\Wass$ is a metric on $\simplex(\Sset)$, so $\Wass(Q,P)=0\iff Q=P$ and $\WEVaR_0(X)=\E_P[X]$. For any $Q$, $\E_Q[X]\le\max_sX(s)$, so $\WEVaR_\varepsilon(X)\le\max_sX(s)$. Let $s^\bullet\in\arg\max_sX(s)$. Then $\Wass(\delta_{s^\bullet},P)=\sum_sP(s)\,d(s,s^\bullet)\le D$, so $\delta_{s^\bullet}\in B$ once $\varepsilon\ge\sum_sP(s)d(s,s^\bullet)$, whence $\WEVaR_\varepsilon(X)=\max_sX(s)$. Combined with monotonicity (Thm.~\ref{thm:dual}(c)), $\WEVaR_\varepsilon(X)\uparrow\max_sX(s)$ as $\varepsilon\uparrow D$.

\paragraph{Lemma (transport vs.\ total variation).}
For all $Q,P$, $\Wass(Q,P)\le D\cdot\mathrm{TV}(Q,P)$, where $\mathrm{TV}(Q,P)=\tfrac12\sum_s|Q(s)-P(s)|$. \emph{Proof:} let $m(s)=\min(Q(s),P(s))$; the coupling that keeps mass $m$ in place ($\gamma(s,s)\!\ge\! m(s)$) and transports the residual mass $\sum_s(Q(s)-m(s))=\mathrm{TV}(Q,P)$ arbitrarily incurs cost $\le D\cdot\mathrm{TV}(Q,P)$, and $\Wass$ is the minimal cost. \hfill$\diamond$

\paragraph{(ii) Sandwich.}
Fix $\alpha\in(0,1]$ and put $\rho=-\ln\alpha$. For any $Q$ with $\KL(Q\Vert P)\le\rho$, Pinsker's inequality gives $\mathrm{TV}(Q,P)\le\sqrt{\KL(Q\Vert P)/2}\le\sqrt{\rho/2}$, so by the Lemma $\Wass(Q,P)\le D\sqrt{\rho/2}=D\sqrt{-\tfrac12\ln\alpha}$. Therefore
\(
\{Q:\KL(Q\Vert P)\le-\ln\alpha\}\subseteq\{Q:\Wass(Q,P)\le D\sqrt{-\tfrac12\ln\alpha}\},
\)
and taking the supremum of $\E_Q[X]$ over the larger set,
\[
\EVaR_\alpha(X)=\!\!\sup_{\KL(Q\Vert P)\le-\ln\alpha}\!\!\E_Q[X]\ \le\!\!\sup_{\Wass(Q,P)\le D\sqrt{-\frac12\ln\alpha}}\!\!\E_Q[X]=\WEVaR_{\,D\sqrt{-\frac12\ln\alpha}}(X).
\]

\paragraph{(iii) Catastrophe domination.}
Assume $P(s^\star)=0$ and $X(s^\star)>\E_P[X]$. \emph{Entropic measure is blind.} $\E_P[e^{tX}]=\sum_{s:P(s)>0}P(s)e^{tX(s)}$ omits the term $s^\star$, so it---and hence $\EVaR_\alpha(X)=\inf_{t>0}\tfrac1t(\ln\E_P[e^{tX}]-\ln\alpha)$---does not depend on $X(s^\star)$, for every $\alpha$. (Dually, any $Q$ with $\KL(Q\Vert P)<\infty$ satisfies $Q\ll P$, so $Q(s^\star)=0$ and $\E_Q[X]$ ignores $X(s^\star)$.) \emph{Transport measure reacts.} Let $\delta=\mathrm{dist}_d(s^\star,\mathrm{supp}\,P)=\min_{s:P(s)>0}d(s,s^\star)$ and $s_0$ an achieving state. For $\varepsilon>\delta$ choose $\eta_0=\min\big(P(s_0),\varepsilon/\delta\big)>0$ and the feasible law $Q_{\eta_0}=P-\eta_0\delta_{s_0}+\eta_0\delta_{s^\star}$, for which $\Wass(Q_{\eta_0},P)\le\eta_0\,\delta\le\varepsilon$. Then
\[
\WEVaR_\varepsilon(X)\ \ge\ \E_{Q_{\eta_0}}[X]=\E_P[X]+\eta_0\big(X(s^\star)-X(s_0)\big),
\]
which is strictly increasing in $X(s^\star)$. Moreover $\WEVaR_\varepsilon(X)=\max_{Q\in B}\sum_sQ(s)X(s)$ is convex in the vector $X$, and by Danskin's theorem its partial right-derivative in $X(s^\star)$ equals $\max\{Q(s^\star):Q\text{ optimal}\}$; for $X(s^\star)$ large enough any optimal $Q$ must place positive mass on $s^\star$ (else $Q_{\eta_0}$ strictly improves), so the derivative is positive and $\WEVaR_\varepsilon(X)$ strictly increases in $X(s^\star)$ whenever $\varepsilon>\delta$. \hfill$\square$

\subsection*{D.\;\;Theorem~\ref{thm:bellman}}
Fix $(s,a,b)$ and write the continuation vector $W(s')=V\big(s',\psi(b,s,a,s')\big)$ and radius $\varepsilon(b)=\beta\Ent(b)$, nominal $\Pbar_b$.

\paragraph{(i) Closed form of the inner minimization.}
Applying Theorem~\ref{thm:dual} to the loss $-W$ and using $(-W)^{c}_\lambda(s)=\max_{s'}(-W(s')-\lambda d(s',s))=-\min_{s'}(W(s')+\lambda d(s',s))$,
\[
\inf_{Q\in\Uset(b)}\E_Q[W]=-\WEVaR_{\varepsilon(b)}(-W)=\sup_{\lambda\ge0}\Big\{\E_{\Pbar_b}\big[W^{c,-}_\lambda\big]-\lambda\,\varepsilon(b)\Big\},\quad W^{c,-}_\lambda(s)=\min_{s'}\big(W(s')+\lambda\,d(s,s')\big),
\]
a one-dimensional concave maximization, with first-order/unsaturated value $\E_{\Pbar_b}[W]-\beta\Ent(b)\,\Lip_d(W)$ (Thm.~\ref{thm:dual}(d) applied to $-W$). Evaluating $\Lip_d(W)$ costs $O(n^2)$ and the dual is a scalar program, versus an $O(n^2)$-variable transportation LP \eqref{eq:appA-primal}.

\paragraph{(ii) Contraction.}
We use two properties of $\mathfrak T$. \emph{Monotonicity:} if $V\le V'$ then $W\le W'$ pointwise, so $\inf_{Q}\E_Q[W]\le\inf_Q\E_Q[W']$ and, taking $\max_a$, $\mathfrak TV\le\mathfrak TV'$. \emph{Scaled constant shift:} for $c\in\R$, adding $c$ to $V$ adds $c$ to every continuation, and $\inf_Q\E_Q[W+c]=\inf_Q\E_Q[W]+c$, so $\mathfrak T(V+c\mathbf 1)=\mathfrak TV+\gamma c\mathbf 1$. For $\gamma<1$, monotonicity and the scaled shift are exactly the hypotheses of the contraction lemma \citep[Ch.~2]{bertsekas1996}: $\mathfrak TV-\mathfrak TV'\le\gamma\|V-V'\|_\infty\mathbf 1$ and symmetrically, so $\|\mathfrak TV-\mathfrak TV'\|_\infty\le\gamma\|V-V'\|_\infty$, a $\gamma$-contraction. For $\gamma=1$ the shift is exact and gives nonexpansiveness; under properness---every policy together with every kernel selection in the sets $\Uset(\cdot)$ reaches the terminal set $\mathcal T$ in uniformly bounded expected time---the standard stochastic-shortest-path argument \citep[Ch.~3]{bertsekas1996} upgrades this to a contraction in a weighted supremum norm $\|\cdot\|_w$: there exist $m\ge1,\ \kappa\in[0,1)$ with $\|\mathfrak T^mV-\mathfrak T^mV'\|_w\le\kappa\|V-V'\|_w$. Either way $\mathfrak T$ has a unique fixed point $V^\star$.

\paragraph{(iii) Safety sandwich.}
\emph{Upper bound.} Since $\Pbar_b\in\Uset(b)$, $\inf_{Q\in\Uset(b)}\E_Q[W]\le\E_{\Pbar_b}[W]$, so $\mathfrak TV\le\mathfrak T_{\mathrm{Bayes}}V$ pointwise, where $\mathfrak T_{\mathrm{Bayes}}$ is the non-robust operator with kernel $\Pbar_b=\sum_zb(z)P(\cdot\mid s,a,z)$; by monotone iteration $V^\star\le V_{\mathrm{Bayes}}$. Committing to one policy under belief $b$ cannot beat knowing the type, so $V_{\mathrm{Bayes}}(s,b)\le\E_{z\sim b}[V^\star_{\mathrm{opt}}(s,z)]$ (nonnegative value of information), giving the upper bound. \emph{Lower bound.} Let $V^\star_{\mathrm{wc}}$ be the fixed point of the operator $\mathfrak T_{\mathrm{wc}}$ obtained by replacing the radius $\beta\Ent(b)$ with its maximum $\beta\ln|\Zset|$ at \emph{every} $(s,b)$, keeping the same nominal $\Pbar_b$. Since $\beta\Ent(b)\le\beta\ln|\Zset|$, the ambiguity ball in $\mathfrak T$ is contained in that of $\mathfrak T_{\mathrm{wc}}$, so its inner infimum is no smaller; hence $\mathfrak TV\ge\mathfrak T_{\mathrm{wc}}V$ pointwise and, by monotone iteration, $V^\star(s,b)\ge V^\star_{\mathrm{wc}}(s,b)$. This is the bound used in Fig.~\ref{fig:sandwich} and requires no nested-set condition. \emph{Convergence.} As $b\to\delta_{z^\star}$, $\Ent(b)\to0$ and $\varepsilon(b)\to0$, so $\Uset(b)\to\{P(\cdot\mid s,a,z^\star)\}$; by the Lipschitz dependence of the value on the radius (Thm.~\ref{thm:dual}(c): $0\le\WEVaR_\varepsilon-\WEVaR_0\le\varepsilon\Lip_d$), $V^\star(s,b)\to V^\star_{\mathrm{opt}}(s,z^\star)$. \hfill$\square$

\subsection*{E.\;\;Supplementary Experiments and Figures}
All figures are reproduced by the released code (a Colab-ready package). The body shows the geometric core (Figs.~\ref{fig:klw1},~\ref{fig:balls}) and the certified guarantee (Fig.~\ref{fig:sandwich}); the remaining plots are collected here.

\paragraph{Quantitative verification of the dualities.}
Figure~\ref{fig:dro} is the solver-side companion to \S\ref{sec:verify}: across a sweep of radii the entropic and Wasserstein measures both rise from the mean to the worst case, with $\EVaR_\alpha\le\WEVaR$ at the Pinsker-matched radius (Thm.~\ref{thm:hier}(ii)); and a zero-nominal-probability disaster leaves the entropic measure flat while $\WEVaR$ reacts (Thm.~\ref{thm:hier}(iii)).

\begin{figure}[ht]
  \centering
  \includegraphics[width=0.82\textwidth]{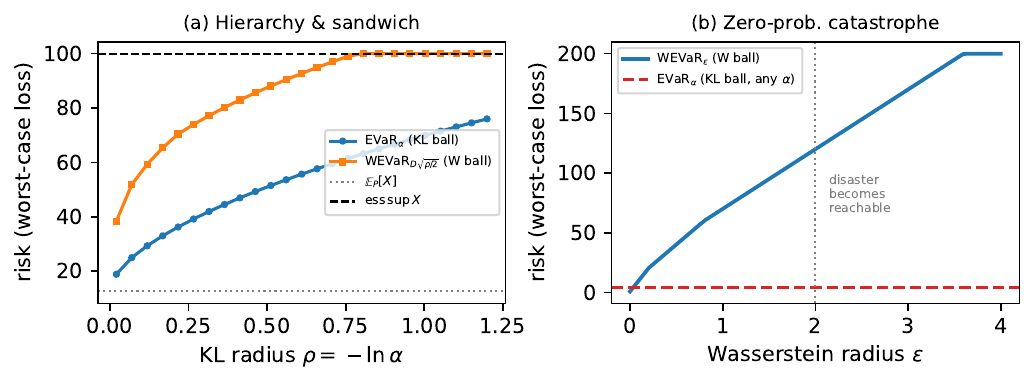}
  \caption{Gurobi-verified comparison on a five-state metric space. \textbf{(a)} hierarchy and the Pinsker sandwich; \textbf{(b)} the zero-probability catastrophe: the entropic value-at-risk is invariant to the disaster's magnitude, $\WEVaR$ is not.}\label{fig:dro}
\end{figure}

\paragraph{Manifolds over the belief simplex.}
For a three-type environment the belief lives on a $2$-simplex, and the controller reads three scalar fields off it (Fig.~\ref{fig:manifold}): the radius $\varepsilon(b)=\beta\Ent(b)$ (an entropy bowl, maximal at the centroid and zero at the vertices), the robust value $V^\star(s_0,b)$, and the policy regions.

\begin{figure}[ht]
  \centering
  \includegraphics[width=\textwidth]{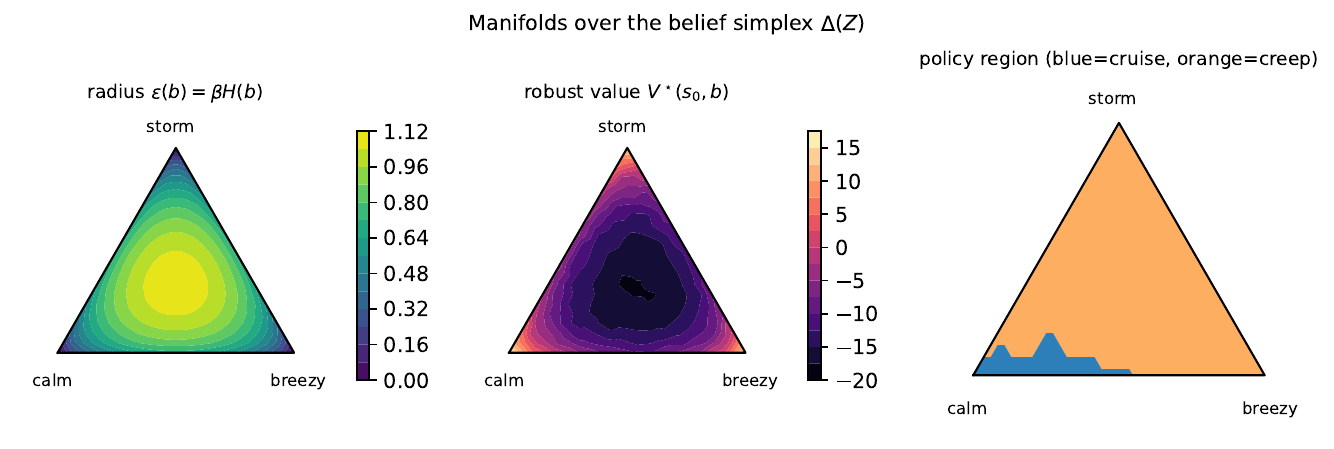}
  \caption{Manifolds over the belief simplex $\simplex(\Zset)$, $|\Zset|=3$ (calm/breezy/storm): ambiguity radius, robust value, and policy region (cruise only near the calm vertex).}\label{fig:manifold}
\end{figure}

\paragraph{Safety switches.}
The belief-dependent policy flips from the safe to the optimal action at a computable threshold (Fig.~\ref{fig:switch}): on the Ambiguous Bridge at $\alpha^\star\approx0.983$ (set by the reward asymmetry, independent of $\beta$), and on the stormy-drone canyon at $b(\mathrm{calm})\approx0.63$.

\begin{figure}[ht]
  \centering
  \begin{minipage}[t]{0.49\textwidth}\centering
    \includegraphics[width=\textwidth]{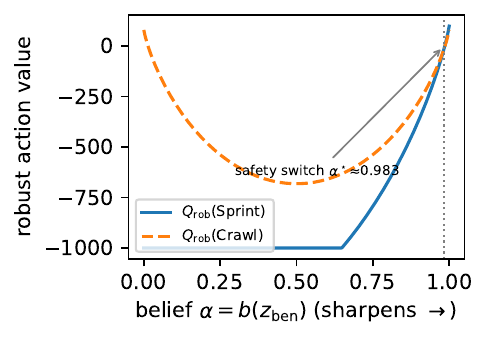}
  \end{minipage}\hfill
  \begin{minipage}[t]{0.49\textwidth}\centering
    \includegraphics[width=\textwidth]{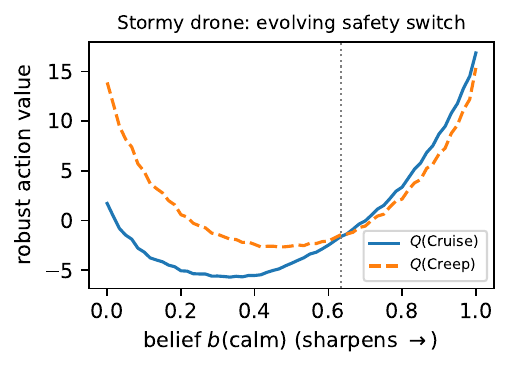}
  \end{minipage}
  \caption{Robust action values vs.\ belief. Left: the Ambiguous Bridge (\textsc{Sprint} vs.\ \textsc{Crawl}, switch at $\alpha^\star\approx0.983$). Right: the stormy-drone canyon (\textsc{Cruise} vs.\ \textsc{Creep}, switch at $b(\mathrm{calm})\approx0.63$).}\label{fig:switch}
\end{figure}

\paragraph{Safety-vs-efficiency rollouts, and an honest caveat.}
Figure~\ref{fig:pareto} compares the three planners on the canyon. $\WEVaR$ Pareto-dominates the static-robust (\textsc{Maximin}) planner---comparable safety at higher return, the ``un-freezing'' effect. In this \emph{calibrated-belief} setting the expected-value (Bayesian) planner is already near-optimally cautious under a severe catastrophe, so the empirical separation between $\WEVaR$ and Bayesian is small; the regime in which belief-scaled caution measurably reduces failures is one of \emph{miscalibrated} belief or distribution shift, where the belief is learned rather than computed from a known model. The body's contribution is therefore the coherent measure, its geometry, and the certified guarantee; this rollout is illustrative.

\begin{figure}[ht]
  \centering
  \includegraphics[width=0.5\textwidth]{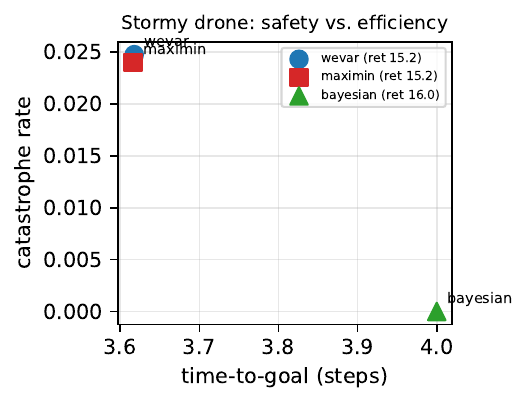}
  \caption{Catastrophe rate vs.\ time-to-goal on the stormy-drone canyon ($4000$ episodes). $\WEVaR$ dominates the static-robust \textsc{Maximin} baseline; the Bayesian planner is already cautious here (see caveat).}\label{fig:pareto}
\end{figure}

\end{document}